\def\eqref#1{equation~\ref{#1}}
\def\ceil#1{\lceil #1 \rceil}
\def\1{\bm{1}}
\DeclareMathAlphabet{\mathsfit}{\encodingdefault}{\sfdefault}{m}{sl}
\SetMathAlphabet{\mathsfit}{bold}{\encodingdefault}{\sfdefault}{bx}{n}
\newcommand{\E}{\mathbb{E}}
\newcommand{\KL}{D_{\mathrm{KL}}}
\title{PCMC-Net:\\Feature-based Pairwise Choice Markov Chains}
\author{Alix Lh\'eritier\\
Amadeus SAS\\
F-06902 Sophia-Antipolis, France\\
\texttt{alix.lheritier@amadeus.com}
}
\DeclareMathOperator{\loss}{loss}
\newcommand{\size}[1]{\left|{#1}\right|}
\newtheorem{prop}{Proposition}
\newtheorem{thm}{Theorem}
\newtheorem{definition}{Definition}
\begin{document}

\maketitle

\begin{abstract}

Pairwise Choice Markov Chains (PCMC) have been recently introduced to overcome limitations of choice models based on traditional axioms unable to express empirical observations from modern behavior economics like context effects occurring when a choice between two options is altered by adding a third alternative.
The inference approach that estimates the transition
rates between each possible pair of alternatives via maximum likelihood suffers when the examples of each alternative are scarce and is inappropriate when new alternatives can be observed at test time.
In this work, we propose an amortized inference approach for PCMC by embedding its definition into a neural network that represents transition rates as a function of the alternatives' and individual's features.
We apply our construction to the complex case of airline itinerary booking where singletons are common (due to varying prices and individual-specific itineraries), and context effects and behaviors strongly dependent on market segments are observed.
Experiments show our network significantly outperforming, in terms of prediction accuracy and logarithmic loss, feature engineered standard and latent class Multinomial Logit models as well as recent machine learning approaches.

\end{abstract}

\section{Introduction}

Choice modeling aims at finding statistical models capturing the human behavior when faced with a set of alternatives. Classical examples include consumer purchasing decisions, choices of schooling or employment, and commuter choices for modes of transportation among available options. 
Traditional models are based on different assumptions about human decision making, e.g. Thurstone's Case V model \citep{thurstone1927law} or Bradley-Terry-Luce (BTL) model \citep{bradley1952rank}. Nevertheless, in complex scenarios, like online shopping sessions presenting numerous alternatives to user-specific queries, these assumptions are often too restrictive to provide accurate predictions.

Formally, there is a universe of alternatives $U$, possibly infinite. In each choice situation, some finite \emph{choice set} $S\subseteq U$ is considered. A \emph{choice model} is a distribution over the alternatives of a given choice set $S$, where the probability of choosing the item $i$ among $S$ is denoted as $P_S(i)$. These models can be further parameterized by the alternatives' features and by those of the individual making the choice.

An important class of choice models is the Multinomial Logit (MNL), a generalization of the BTL model---defined for pairwise choices only---to larger sets.
MNL models satisfy \emph{Luce's axiom} also known as \emph{independence of irrelevant alternatives} \citep{Luce59}, which states that the probability of selecting one alternative over another from a set of many alternatives is not affected by the presence or absence of other alternatives in the set.
Moreover, any model satisfying  Luce's axiom is equivalent to some MNL model \citep{luce1977choice}. Equivalently, the  probability of choosing some item $i$ from a given set $S$ can be expressed as $P_S(i)=\nicefrac{w_i}{\sum_{j\in S} w_j }$ where $w_i$ is the \emph{latent value} of the item $i$.
Luce's axiom implies \emph{stochastic transitivity} i.e.~if $P(a \rhd b) \geq 1/2$ and $P(b \rhd c) \geq 1/2$, then 
$P(a \rhd c) \geq \max \left(P(a \rhd b), P(b \rhd c)\right)$ where $P(i \rhd j)\equiv P_{\{i,j\}}(i)$ \citep{luce1977choice}.
Stochastic transitivity implies the necessity of a total order across all elements and also prevents from expressing cyclic preference situations like the stochastic rock-paper-scissors game described in Section \ref{sec:properties}.
Thurstone’s Case V model exhibits strict stochastic transitivity but does not satisfy
Luce's axiom \citep{Adams1958}. Luce's axiom and stochastic transitivity are strong assumptions that often do not hold for empirical choice data (see \citep{ragain2016pairwise} and references therein).
For example, Luce's axiom prevents models from expressing \emph{context effects} like
the \emph{attraction effect} (also know as \emph{asymmetric dominance} or \emph{decoy} effect), the \emph{similarity effect} and the \emph{compromise effect}.
The attraction effect occurs when two alternatives are augmented with an \emph{asymmetrically dominated} one
(i.e., a new option that is inferior in all aspects with respect to one option, but inferior in only some aspects and superior in other aspects with respect to the other option) 
and the probability of selecting the better, dominant alternative increases \citep{huber1982adding}.
The similarity effect arises from the introduction
of an alternative that is similar to, and competitive with, one of
the original alternatives, and causes a decrease in the probability
of choosing the similar alternative \citep{tversky1972elimination}.
The compromise effect occurs when there is an increase in the probability of choosing an
alternative that becomes the intermediate option when a third
extreme option is introduced \citep{simonson1989choice}. Examples of these effects are visualized in Section \ref{sec:exp-synth}.

A larger class of models is the one of Random Utility Models (RUM) \citep{block1960random,manski1977structure}, which includes MNL but also other models satisfying neither Luce's axiom nor stochastic transitivity. This class affiliates with each $i\in U$ a random variable $X_i$ and defines for each subset $S\subseteq U$ the probability $P_S(i) = P(X_i \geq  X_j , \forall j \in S)$. RUM exhibits \emph{regularity} i.e.~if $A \subseteq B$ then $P_A(x) \geq P_B(x)$.
Regularity also prevents models from expressing context effects \citep{huber1982adding}.
The class of Nested MNL \citep{mcfadden1980econometric} allows to express RUM models but also others that do not obey regularity. Nevertheless, inference is practically difficult for Nested MNL models.

Recently, a more flexible class of models called Pairwise Choice Markov Chains has been introduced in \cite{ragain2016pairwise}. This class includes MNL but also other models that satisfy neither Luce's axiom, nor stochastic transitivity, nor regularity. This class defines the choice distribution as the stationary distribution of a continuous time Markov chain defined by some transition rate matrix.
Still, it satisfies a weakened version of Luce's axiom called \emph{uniform expansion} stating that if we add ``copies'' (with no preference between them), the probability of choosing one element of the copies is invariant to the number of copies.
Although the flexibility of this class is appealing, the proposed inference is based on maximizing the likelihood of the rate matrix for the observed choices which is prone to overfitting when the number of observations for each possible alternative is small and is inappropriate when new alternatives can be seen at test time.

Alternatives and individuals making choices can be described by a set of features that can be then  used to understand their impact on the choice probability.
A linear-in-features MNL assumes that the latent value is given by a linear combination of the parameters of the alternatives and the individual.
Features of the individual can be taken into account by these models but inference suffers from scarcity and is inappropriate when new alternatives can be seen at test time.
The latent class MNL (LC-MNL) model \citep{greene2003latent} takes into account individual heterogeneity by using a Bayesian mixture over different latent classes---whose number must be specified---in which homogeneity and linearity is assumed. 
A linear-in-features parameterization of PCMC, for features in $\mathbb{R}^d$, was suggested in \citep[Appendix]{ragain2016pairwise} but requires fitting a weight matrix of size $\size{U}\times d$, which makes it scale poorly and does not allow to predict unseen alternatives. In this work, we propose an \emph{amortized inference} approach for PCMC in the sense that the statistical parameters are reused for any pair of alternatives and their number is thus independent of the size of the universe. In addition, we allow non-linear modeling by using a neural network.

In complex cases like airline itinerary choice, where the alternatives are strongly dependent on an individual-specific query and some features, like price, can be dynamic, the previous approaches have limited expressive power or are inappropriate.
Two recently introduced methods allow complex feature handling for alternatives and individuals.
\cite{mottini2017deep} proposes a recurrent neural network method consisting in learning to point, within a sequence of alternatives, to the chosen one.
This model is appealing because of its feature learning capability but neither its choice-theoretic properties have been studied nor its dependence on the order of the sequence.
\cite{lheritier2019airline}  proposes to train a Random Forest classifier to predict whether an alternative is going to be predicted or not independently of the rest of the alternatives of the choice set. This approach does not take into account the fact that in each choice set exactly one alternative is chosen. For this reason, the probabilities provided by the model are only used as scores to rank the alternatives, which can be interpreted as latent values---making it essentially equivalent to a non-linear MNL.
To escape this limitation and make the latent values dependent on the choice set, relative features are added (e.g. the price for $i$-th alternative $\text{price}_i$ is converted to $\text{price}_i/\min_{j\in S} \text{price}_j$ ). The non-parametric nature of this model is appealing but its choice-theoretic properties have not been studied either.

In this work, we propose to enable PCMC with neural networks based feature handling, therefore enjoying both the good theoretical properties of PCMC and the complex feature handling of the previous neural network based and non-parametric methods. This neural network parameterization of PCMC makes the inference amortized allowing to handle large (and even infinite) size universes as shown in our experiments for airline itinerary choice modeling shown in Section \ref{sec:experiments}.

\section{Background: Pairwise Choice Markov Chains} 
\label{sec:background}

\subsection{Definition}

A Pairwise Choice Markov Chain (PCMC) \citep{ragain2016pairwise} defines the choice probability $P_S(i)$ as the probability mass on the alternative $i\in S$ of the stationary distribution of a continuous time Markov chain (CTMC) whose set of states corresponds to $S$. The model's parameters are the off-diagonal entries $q_{ij}\geq 0$ of a rate matrix $Q$ indexed by pairs of elements in $U$.
Given a choice set $S$, the choice distribution is the stationary distribution of the continuous time Markov chain given by the matrix $Q_S$ obtained by restricting the rows and columns of $Q$ to elements in $S$ and setting
$q_{i i}=-\sum_{j \in S \backslash i} q_{i j}$ for each $i\in S$.
Therefore, the distribution $P_S$ is parameterized by the $\size{S}(\size{S}-1)$ transition rates of $Q_S$.

The constraint 
\begin{equation}
\label{eq:pcmc-constraint}
q_{ij} + q_{ji} >0
\end{equation} 
is imposed in order to guarantee that the chain has a single closed communicating class which implies the existence and the unicity of the stationary distribution $\pi_S$ (see, e.g., \cite{norris_1997}) obtained by solving 
\begin{equation}
\begin{cases}
\pi_S Q_S = \mathbf{0} \\
\pi_S \mathbf{1}^T = 1
\end{cases}
\end{equation}
where $\mathbf{0}$ and $\mathbf{1}$ are row vectors of zeros and ones, respectively.
Since any column of $Q_s$ is the opposite of the sum of the rest of the columns, it is equivalent to solve 
\begin{equation}
\pi_S Q'_S=\left[ \begin{array}{cccc}{\mathbf{0}}& |  & {1} \end{array}\right]
\end{equation}
where $Q'_S\equiv \left[ \begin{array}{cccc} \left((Q_S)_{ij}\right)_{1\leq i \leq \size{S},1\leq j < \size{S}}  & | &  \mathbf{1}^T \end{array}\right]$.

\subsection{Properties}

In \cite{ragain2016pairwise}, it is shown that PCMC allow to represent any MNL model, but also models that  are non-regular and do not satisfy stochastic transitivity (using the rock-scissor-paper example of Section \ref{sec:properties}).

In the classical red bus/blue bus example (see, e.g., \cite{train2009discrete}), the color of the bus is irrelevant to the preference of the transportation mode ``bus'' with respect to the ``car'' mode. Nevertheless, MNL models reduce the probability of choosing the ``car'' mode when color variants of buses are added, which does not match empirical behavior. 
PCMC models allows to model this kind of situations thanks to a property termed \emph{contractibility}, which intuitively means that we can ``contract'' subsets $A_i\subseteq U$ to a single ``type'' when the probability of choosing
an element of $A_i$ is independent of the pairwise probabilities between elements within the subsets.
Formally, a partition of $U$ into non-empty sets $A_{1}, \ldots, A_{k}$ is a \emph{contractible partition} if $q_{a_{i} a_{j}}=\lambda_{i j}$ for all $a_{i} \in A_{i}, a_{j} \in A_{j}$ for some $\Lambda=\left\{\lambda_{i j}\right\}$ for $i, j \in\{1, \ldots, k\}$. Then, the following proposition is shown.

\begin{prop}[\cite{ragain2016pairwise}]
\label{thm:contract}
For a given $\Lambda$ , let $A_{1}, \ldots, A_{k}$ be a contractible partition for two PCMC models on
$U$ represented by $Q, Q^{\prime}$ with stationary distributions $\pi, \pi^{\prime} .$ Then, for any $A_{i}$,
$$
\sum_{j \in A_{i}} P_U(j)=\sum_{j \in A_{i}} P_U^{\prime}(j).
$$
\end{prop}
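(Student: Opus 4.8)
The plan is to recognize this as a \emph{lumpability} statement: the contractible partition collapses each of the two chains to one and the same ``lumped'' continuous-time Markov chain on the $k$ types, whose stationary distribution is the vector of aggregated masses $\bar\pi_i\equiv\sum_{j\in A_i}P_U(j)$. Since that lumped chain depends only on $\Lambda$ and the block sizes $\size{A_1},\dots,\size{A_k}$ and has a unique stationary distribution, $\bar\pi$ and $\bar\pi'$ must coincide, which is exactly the claim.

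Concretely, I would first define the lumped rate matrix $\bar Q$ on $\{1,\dots,k\}$ by $\bar q_{ij}\equiv\size{A_j}\lambda_{ij}$ for $i\neq j$ and $\bar q_{ii}\equiv-\sum_{j\neq i}\bar q_{ij}$. The defining property of a contractible partition, $q_{a_ia_j}=\lambda_{ij}$, gives $\sum_{b\in A_j}q_{ab}=\size{A_j}\lambda_{ij}$ for \emph{every} $a\in A_i$; that is, the total outflow rate from any state of $A_i$ into $A_j$ is the same for all states of $A_i$, and $\bar Q$ involves no within-block rates at all.

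The technical heart is then to check that $\bar\pi$ solves $\bar\pi\,\bar Q=\vzero$. Starting from the global balance equations $\pi Q=\vzero$, i.e.\ $\sum_{a\neq b}\pi(a)q_{ab}=\pi(b)\sum_{a\neq b}q_{ba}$ for each state $b$, I would sum over $b\in A_j$ and split the inner sums according to whether the other endpoint lies in $A_j$ or not. The intra-block contributions $\sum_{a,b\in A_j,\,a\neq b}\pi(a)q_{ab}$ are identical on the two sides (after relabeling $a\leftrightarrow b$) and cancel; replacing $q_{ab}$ by $\lambda_{ij}$ in the remaining cross-block terms turns the identity into $\sum_{i\neq j}\bar\pi_i\,\size{A_j}\lambda_{ij}=\bar\pi_j\sum_{i\neq j}\size{A_i}\lambda_{ji}$, which is precisely $(\bar\pi\bar Q)_j=0$. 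Together with $\sum_i\bar\pi_i=\sum_{j\in U}\pi(j)=1$, this shows $\bar\pi$ is a stationary distribution of $\bar Q$, and the identical argument applied to $Q'$ shows $\bar\pi'$ is one as well.

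It remains to argue uniqueness for $\bar Q$. Applying \eqref{eq:pcmc-constraint} to any $a_i\in A_i$, $a_j\in A_j$ gives $\lambda_{ij}+\lambda_{ji}=q_{a_ia_j}+q_{a_ja_i}>0$, hence $\bar q_{ij}+\bar q_{ji}=\size{A_j}\lambda_{ij}+\size{A_i}\lambda_{ji}\geq\lambda_{ij}+\lambda_{ji}>0$; by the same reasoning recalled in Section~\ref{sec:background} for $Q_S$, $\bar Q$ has a single closed communicating class and therefore a unique stationary distribution, so $\bar\pi=\bar\pi'$. I expect the only genuinely fiddly part to be the index bookkeeping in the summation over $b\in A_j$ --- verifying that the intra-block flows really do cancel and that what is left can be written solely in terms of the $\bar\pi_i$ and $\Lambda$; the rest (lumpability from the hypothesis, the $\bar q_{ij}+\bar q_{ji}>0$ check) is routine.
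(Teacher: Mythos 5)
Your proof is correct, and since the paper only imports this proposition from \cite{ragain2016pairwise} without reproving it, the relevant comparison is with that source: your lumpability argument --- aggregate the balance equations over blocks, cancel the intra-block flows, and observe that the resulting reduced system depends only on $\Lambda$ and the block sizes and satisfies the constraint of \eqref{eq:pcmc-constraint} (hence has a unique solution) --- is essentially the proof given there. No gaps; the index bookkeeping you flag as the fiddly part does go through exactly as you describe.
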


Then it is shown, that contractibility implies \emph{uniform expansion} formally defined as follows.

\begin{definition}[Uniform Expansion]
Consider a choice between $n$ elements in a set $S^{(1)}=$
$\left\{i_{11}, \ldots, i_{n 1}\right\},$ and another choice from a set $S^{(k)}$ containing $k$ copies of each of the n elements:
$S^{(k)} =\left\{i_{11}, \ldots, i_{1 k}, i_{21}, \ldots, i_{2 k}, \ldots, i_{n 1}, \ldots, i_{n k}\right\} .$ The axiom of uniform expansion states that for
each $m\in\{1, \ldots, n\}$ and all $k \geq 1$,
$$
P_{S^{(1)}}(i_{m 1})=\sum_{j=1}^{k} P_{S^{(k)}}(i_{m j})  .
$$
\end{definition}

\subsection{Inference}

Given a dataset $\mathcal{D}$, the inference method proposed in \cite{ragain2016pairwise} consists in maximizing the log likelihood of the rate matrix $Q$ indexed by $U$
\begin{equation}
\log \mathcal{L}(Q ; \mathcal{D})=\sum_{S \subseteq U} \sum_{i \in S} C_{i S}(\mathcal{D}) \log \left(P^Q_S(i)\right)
\end{equation} 
where $P^Q_S(i)$ denotes the probability that $i$ is selected from $S$ as a function of $Q$ and $C_{i S}(\mathcal{D})$ denotes the number of times in the data that $i$ was chosen out of set $S$.

This optimization is difficult since there is no general closed form expression for $P^Q_S(i)$ and
the implicit definition also makes it difficult to derive gradients for $\log \mathcal{L}$ 	with respect to the parameters $q_{ij}$. The authors propose to use  Sequential Least Squares Programming (SLSQP) to maximize $\log \mathcal{L}(Q ; \mathcal{D})$, which is nonconcave in general.
However, in their experiments, they encounter numerical instabilities leading to violations ($q_{ij}+q_{ji}=0$) of the PCMC definition, which were solved with additive smoothing at the cost of some efficacy of the model. 
In addition, when the examples of each alternative are scarce like in the application of Section \ref{sec:experiments}, this inference approach is prone to severe overfitting and is inappropriate to predict unseen alternatives.
These two drawbacks motivate the amortized inference approach we introduce next.

\section{PCMC-Net} 
\label{sec:pcmc-net}

We propose an amortized inference approach for PCMC based on a neural network architecture called PCMC-Net that 
uses the alternatives' and the individual's features to determine the transition rates and can be trained using standard stochastic gradient descent techniques.

\subsection{Architecture}

\paragraph{Input layer}
For PCMC, the choice sets $S$ were defined as a set of indices. For PCMC-Net, since it is feature-based, the choice sets $S$ are defined as sets of tuples of features. Let $S_i$ be the tuple of features of the $i$-th alternative of the choice set $S$ belonging to a given feature space $\mathcal{F}_a$ and $I$ be the tuple of the individual's features belonging to a given feature space $\mathcal{F}_0$. The individual's features are allowed to be an empty tuple.

\paragraph{Representation layer}

The first layer is composed of a representation function for the alternatives' features
\begin{equation}
\rho_{w_a}:\mathcal{F}_a \to \mathbb{R}^{d_a}
\end{equation}
and a representation function for the individual's features
\begin{equation}
\rho_{w_0}:\mathcal{F}_0 \to \mathbb{R}^{d_0}
\end{equation}
where $w_0$ and $w_a$ are the sets of weights parameterizing them and  $d_0,d_a\in\mathbb{N}$ are hyperparameters.
These functions can include, e.g., embedding layers for categorical variables, a convolutional network for images or text, etc., depending on the inputs' types. 

\paragraph{Cartesian product layer}

In order to build the transition rate matrix, all the pairs of different alternatives need to be considered, this is accomplished by computing the cartesian product 
\begin{equation}
\{\rho_{w_a}(S_1) ,\dots,\rho_{w_a}(S_{\size{S}}) \} \times 
\{\rho_{w_a}(S_1) ,\dots,\rho_{w_a}(S_{\size{S}}) \}
.
\end{equation}

The combinations of embedded alternatives are concatenated together with the embedded features of the individual, i.e.
\begin{equation}
R_{ij} \equiv \rho_{w_0}(I)\oplus\rho_{w_a}(S_i)\oplus\rho_{w_a}(S_j)
\end{equation}
where $\oplus$ denotes vector concatenation.

\paragraph{Transition rate layer}

The core component is a model of the transition rate $(Q_S)_{ij}, i\neq j $:
\begin{equation}
\hat{q}_{ij}\equiv \max(0,f_{w_q}(R_{ij})))+\epsilon
\end{equation}
where $f_{w_q}$ consists of multiple fully connected layers parameterized by a set of weights $w_q$ and $\epsilon>0$ is a hyperparameter. 
Notice that taking the maximum with 0 and adding $\epsilon$ guarantees non-negativity and the condition of Eq. \ref{eq:pcmc-constraint}.    
The transition rate matrix $\hat{Q}$ is then obtained as follows:
\begin{equation}
\hat{Q}_{ij} \equiv 
\begin{cases}
\hat{q}_{ij} & \text{if } i\neq j \\
-\sum_{j\neq i} \hat{q}_{ij}  & \text{otherwise} \\
\end{cases}
.
\end{equation}

\paragraph{Stationary distribution layer}

The choice probabilities correspond to the stationary distribution $\hat{\pi}$ that is guaranteed to exist and be unique by the condition of Eq. \ref{eq:pcmc-constraint} and can be obtained by solving the system
\begin{equation}
\label{eq:stationary-dist-eq}
\hat{\pi} \left[ \begin{array}{c|c} \left(\hat{Q}_{ij}\right)_{1\leq i \leq \size{S},1\leq j < \size{S}}  &  \mathbf{1}^T \end{array}\right] = \left[ \begin{array}{c|c} \mathbf{0} & {1} \end{array}\right] 
\end{equation}
by, e.g., partially-pivoted LU decomposition which can be differentiated with automatic differentiation.

The whole network is represented in Fig. \ref{fig:pcmc-net}. 

\begin{figure}
\includegraphics[width = 1.\columnwidth] {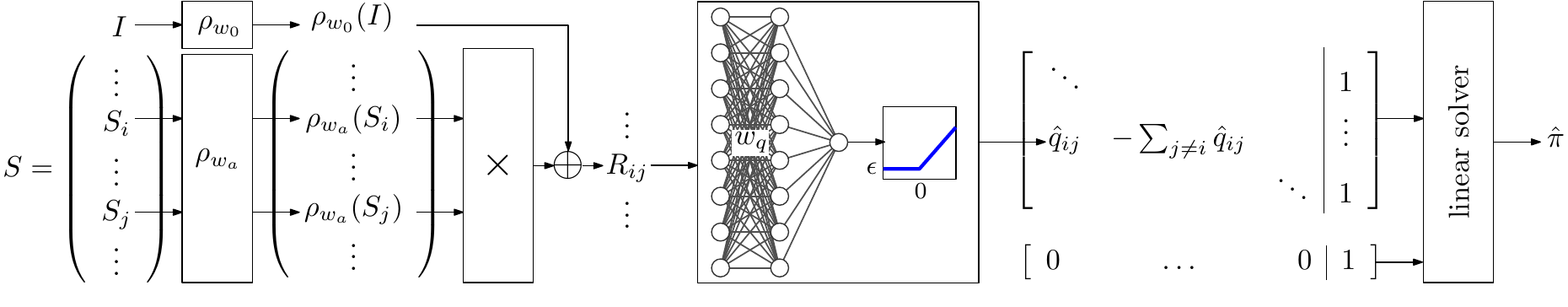}
\caption{{\bf PCMC-Net.} $\times$ denotes the cartesian product and $\oplus$ vector concatenation.} 
\label{fig:pcmc-net}
\end{figure}

\subsection{Properties}
\label{sec:properties}

\paragraph{Non-regularity}
As shown in \cite{ragain2016pairwise}, non-regular models can be obtained by certain rate matrices.
For example, the stochastic rock-paper-scissors game can be described by a non-regular model obtained with the following transition rate matrix with $\frac{1}{2}<\alpha\leq 1$:
\begin{equation}
Q=\left[ \begin{array}{ccc}{-1} & {1-\alpha} & {\alpha} \\ {\alpha} & {-1} & {1-\alpha} \\ {1-\alpha} & {\alpha} & {-1}\end{array}\right]
.
\end{equation}

PCMC-Net can represent such a model by setting the following design parameters.
In this case, the individual's features correspond to an empty tuple yielding an empty vector as representation. 
By setting $\rho_{w_a}$ to a one-hot representation of the alternative (thus $d_a=3$), a fully connected network $f_{w_q}$ consisting of one neuron (i.e. six coefficients and one bias) is enough to represent this matrix since six combinations of inputs are of interest.

\paragraph{Non-parametric limit}
More generally, the following theorem shows that any PCMC model can be arbitrarily well approximated by PCMC-Net.
\begin{thm}
If $\rho_{w_a}$ and $f_{w_q}$ are given enough capacity, PCMC-Net can approximate any PCMC model arbitrarily well.	
\end{thm}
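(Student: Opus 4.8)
The plan is to reduce the statement to two ingredients: (i) the stationary-distribution map is continuous in the transition rates, and (ii) with enough capacity $\rho_{w_a}$ and $f_{w_q}$ can reproduce, to arbitrary accuracy, any prescribed table of nonnegative rates. I read ``any PCMC model'' as a PCMC model on a finite universe $U=\{1,\dots,n\}$ whose alternatives are identified with pairwise-distinct feature tuples in $\mathcal{F}_a$; the claim is then that for every $\delta>0$ there are weights $w_a,w_q$ (and, since a generic PCMC model carries no individual covariates, $\rho_{w_0}\equiv\vzero$) such that $|\hat P_S(i)-P_S(i)|<\delta$ for every choice set $S\subseteq U$ and every $i\in S$. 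For genuinely infinite $U$ a fixed finite-capacity $\rho_{w_a}$ cannot separate all alternatives, so the statement should be understood in this finite (or finitely-sampled) sense.

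First I would establish continuity. Fix a choice set $S$. By the discussion around Eq.~\ref{eq:stationary-dist-eq}, $\pi_S$ is the unique solution of the linear system $\pi_S[\,(Q_S)_{ij}\ |\ \vone^T\,]=[\,\vzero\ |\ 1\,]$, whose coefficient matrix $Q'_S$ is invertible precisely because the constraint of Eq.~\ref{eq:pcmc-constraint} holds (equivalently, a single closed communicating class). Invertibility is an open condition and matrix inversion is a smooth map on the set of invertible matrices, so on the open set $\{(q_{ij})_{i\ne j\in S}: q_{ij}+q_{ji}>0\}$ the map sending the off-diagonal rates to $\pi_S$ is continuous (indeed $C^\infty$). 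Taking the finite minimum over all $S\subseteq U$ and all $i\in S$, there is $\eta>0$ such that whenever a rate matrix $\hat Q$ satisfies $\hat q_{ij}+\hat q_{ji}>0$ and $|\hat q_{ij}-q_{ij}|<\eta$ for all pairs, its induced choice probabilities satisfy $|\hat P_S(i)-P_S(i)|<\delta$ for all $S$ and $i$.

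Next I would realize the rates. Choose the hyperparameter $\epsilon$ with $0<\epsilon<\min\!\big(\eta,\ \min\{q_{ij}:q_{ij}>0\}\big)$; then the target values $t_{ij}\equiv q_{ij}-\epsilon$ satisfy $\max(0,t_{ij})+\epsilon=q_{ij}$ whenever $q_{ij}>0$, while for $q_{ij}=0$ the best achievable value $\hat q_{ij}=\epsilon$ is already within $\eta$ of $0$. Now use capacity and the universal approximation theorem: pick $w_a$ so that $\rho_{w_a}$ maps the $n$ feature tuples to points within some small $\eta'$ of the standard basis $\ve_1,\dots,\ve_n\in\R^{n}$ (so $d_a=n$), and pick $w_q$ so that the fully connected network $f_{w_q}$ approximates, uniformly on a compact neighbourhood of $\{\ve_i\oplus\ve_j:i,j\}$, a fixed continuous function $g$ with $g(\ve_i\oplus\ve_j)=t_{ij}$ (such a $g$ exists, e.g.\ a finite sum of localized bumps; the individual block is absent). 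Since $x\mapsto\max(0,x)+\epsilon$ is $1$-Lipschitz and $g$ is uniformly continuous, taking $\eta'$ and the $f_{w_q}$-approximation error small enough forces $|\hat q_{ij}-q_{ij}|<\eta$ for every pair appearing in every $S$, and $\hat q_{ij}\ge\epsilon>0$ makes Eq.~\ref{eq:pcmc-constraint} hold; combining with the previous paragraph yields $|\hat P_S(i)-P_S(i)|<\delta$.

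I expect the main obstacle to be the continuity step, made uniform over all choice sets, together with verifying that the constraint of Eq.~\ref{eq:pcmc-constraint} (hence invertibility of $Q'_S$, hence existence and uniqueness of $\pi_S$) is robust under the perturbation induced by finite network error. The approximation-theoretic parts are comparatively routine, because $f_{w_q}$ only needs to be accurate on the finite input set $\{\ve_i\oplus\ve_j\}$ and the $\max(0,\cdot)+\epsilon$ clamp is harmless once $\epsilon$ lies below the smallest nonzero rate. A secondary point worth stating carefully is the scope of ``any PCMC model'': the argument applies to finite (or finitely observed) universes with distinguishable alternatives, since a finite-capacity $\rho_{w_a}$ cannot produce separating representations for an infinite $U$.
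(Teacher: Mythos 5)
Your proof is correct and follows the same two-step skeleton as the paper's: reduce the claim to approximating the single rate matrix $Q$ that generates every $\pi_S$, transfer rate-matrix accuracy to choice-probability accuracy via continuity of the stationary distribution in the entries of $Q$ (the paper invokes Cramer's rule, you invoke smoothness of matrix inversion on the invertible locus --- the same fact), and realize the target rates with universal approximation. The one genuine divergence is how the floor $\hat q_{ij}\geq\epsilon$ is reconciled with PCMC rates that may be exactly zero. The paper keeps $\epsilon$ fixed, replaces the null entries of $Q^\star$ by $\epsilon$, and multiplies the nonzero entries by a large constant $c$, using the scale-invariance of the stationary distribution to make the spurious $\epsilon$ entries relatively negligible as $c\to\infty$. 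You instead treat $\epsilon$ as a tunable hyperparameter and shrink it below the continuity modulus $\eta$, so that replacing $0$ by $\epsilon$ is itself an admissible perturbation. Both are valid: the paper's rescaling works for any fixed $\epsilon$, which is closer to how the architecture is actually deployed, while your version is slightly more elementary and adds two clarifications the paper leaves implicit --- the restriction to a finite universe with distinguishable feature tuples, and the uniformity of the continuity bound over all choice sets $S$ --- both of which are worth stating.
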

\begin{proof}
A PCMC model jointly specifies a family of distributions $\pi_S$ for each $S \in 2^U$ obtained by subsetting a single rate matrix $Q$ indexed by $U$.
Therefore, it is sufficient to prove that PCMC-Net can approximate any matrix $Q$ since $\hat{q}_{ij}$, with $i\neq j$, does not depend on $S$. 
PCMC-Net forces the transition rates to be at least $\epsilon$, whereas the PCMC definition allows any $q_{ij}\geq0$ as long as $q_{ij}+q_{ji}>0$.
Since multiplying all the entries of a rate matrix by
some $c > 0$ does not affect the stationary distribution of the corresponding CTMC, let us consider, without loss of generality, an arbitrary PCMC model given by a transition rate matrix $Q^\star$, whose entries are
 either at least $\epsilon$ or zero. Let $\pi^\star$ be its stationary distribution.
Then, let us consider the matrix $Q(\epsilon,c)$ obtained by replacing the null entries of $Q^\star$ by $\epsilon$ and by multiplying the non-null entries by some $c>0$, and let $\pi(\epsilon,c(\delta))$ be its stationary distribution.
Since, by Cramer's rule, the entries of the stationary distribution are continuous functions of the entries of the rate matrix, for any $\delta>0$, there exist $c(\delta)>0$ such that $\size{\pi(\epsilon,c(\delta))-\pi^\star} < \delta$.

Since deep neural networks
are universal function approximators \citep{hornik1989multilayer}, PCMC-Net allows to represent arbitrarily well any $Q(\epsilon,c)$ if enough capacity is given to the network, which completes the proof. 
\end{proof}

\paragraph{Contractibility}

Let $Q,Q'$ be the rate matrices obtained after the transition rate layer of two different PCMC-Nets on a finite universe of alternatives $U$. Then, Proposition \ref{thm:contract} can be applied.
Regarding uniform expansion, when copies are added to a choice set, their transition rates to the other elements of the choice set will be identical since they only depend on their features. Therefore, PCMC-Net allows uniform expansion.

\subsection{Inference}

The logarithmic loss is used to assess the predicted choice distribution $\hat{\pi}$ given by the model parameterized by $w\equiv w_0\cup w_a \cup w_q$ on the input $(I,S)$ against the index of actual choice $Y_S$, 
 \begin{equation}
\loss(w,I,S,Y_S) \equiv \log \hat{\pi}_{Y_S}
.
\end{equation}

Training can be performed using stochastic gradient descent and dropout to avoid overfitting, which is stable unlike the original inference approach.

\section{Experiments on synthetic data with context effects}
\label{sec:exp-synth}

To illustrate the ability of PCMC and PCMC-Net models of capturing context effects we simulate them using the  multiattribute linear ballistic accumulator (MLBA) model of \cite{trueblood2014multiattribute} that is able to represent attraction, similarity and compromise effects.
MLBA models choice as a process where independent accumulators are associated to each alternative and race toward a threshold. The alternative whose accumulator reaches the threshold first is selected. 
The speed of each acculumator is determined by a number of parameters modeling human psychology notably, weights that determine the attention paid to each comparison 
and a curvature parameter determining a function that gives the subjective value of each alternative from its objective value.

We consider the example of \cite[Figure 7]{trueblood2014multiattribute} reproduced in Figure \ref{fig:ground-truth} where a choice set of two fixed alternatives $\{a,b\}$ is augmented with a third option $c$ and the preference of $a$ over $b$, i.e.~$\frac{P_{\{a,b,c\}}(a)}{P_{\{a,b,c\}}(a)+P_{\{a,b,c\}}(b)}$, is computed.
The model considers two attributes such that higher values are more preferable.
The two original alternatives are $a=(4,6)$, $b=(6,4)$.
For example, these can correspond to two different laptops with RAM capacity and battery life as attributes, and $c$ is a third laptop choice influencing the subjective value of $a$ with respect to $b$.

In order to generate synthetic choice sets, we uniformly sample the coordinates of $c$ in $[1,9]^2$. Then, we compute the choice distribution given by the MLBA model\footnote{Using the code available at \url{https://github.com/tkngch/choice-models}.} %
that is used to sample the choice.

We instantiate PCMC-Net with an identity representation layer with $d_a=4$ and $d_0=0$ and a transition rate layer with $h\in\{1,2,3\}$ hidden layers of $\nu=16$ nodes with Leaky ReLU activation ($\mathrm{slope}=0.01$) and $\epsilon=0.5$.\footnote{We trained it with the Adam optimizer with one choice set per iteration, a learning rate of 0.001 and no 	dropout, for 100 epochs. Code is available at \url{https://github.com/alherit/PCMC-Net}.} In order to use the original PCMC model\footnote{We used the implementation available at \url{https://github.com/sragain/pcmc-nips}, with additive smoothing $\alpha=0.1$ and a maximum of 50 iterations. The best model over 20 restarts was considered.}, we discretize the attributes of the third option using 8 bins on each attribute, obtaining 64 different alternatives in addition to identified $a$ and $b$. We also compare to a linear-in-features MNL model\footnote{Using Larch available at \url{https://github.com/jpn--/larch}.}.
Figure \ref{fig:trueblood-pref} shows how the different models represent the preference for $a$ over $b$.
Table \ref{tab:kl}, shows a Monte Carlo estimation of the expected Kullback-Leibler divergence comparing each model $\hat{P}$ to the true MLBA model $P$
\begin{equation}
\E_{c\sim\mathcal{U}([1,9]^2)} [\KL (P\Vert\hat{P})]
= 
\frac{1}{64} \int_{[1,9]^2} \sum_{i\in\{a,b,c\}} P_{\{a,b,c\}} (i) \log \frac{P_{\{a,b,c\}} (i)}{\hat{P}_{\{a,b,c\}} (i)}  dc
.
\end{equation}

\begin{figure}[t!]
	\centering
	\begin{subfigure}[t]{0.24\textwidth}
		\centering
		\includegraphics[height=1.3in]{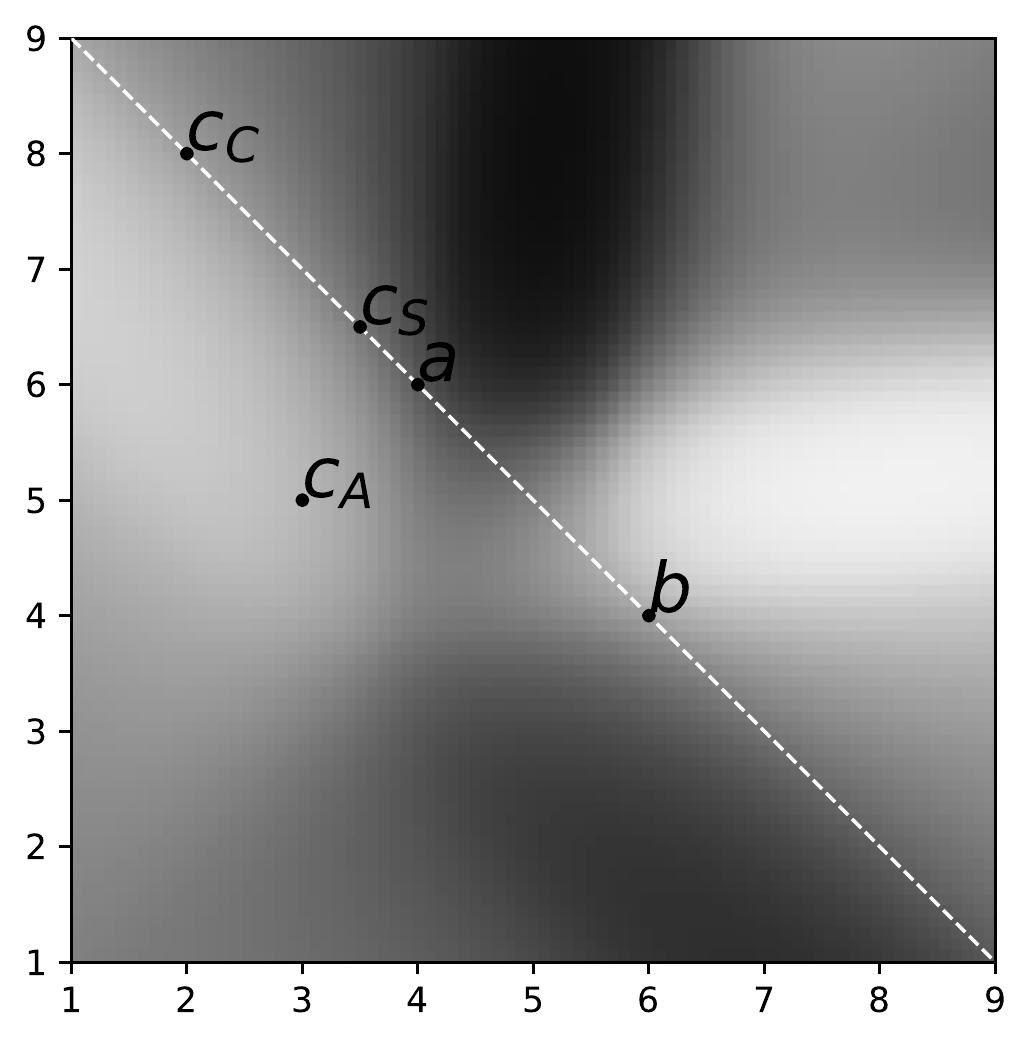}
		\caption{Ground truth.}
		\label{fig:ground-truth}
	\end{subfigure}%
	~ 
	\begin{subfigure}[t]{0.24\textwidth}
		\centering
		\includegraphics[height=1.3in]{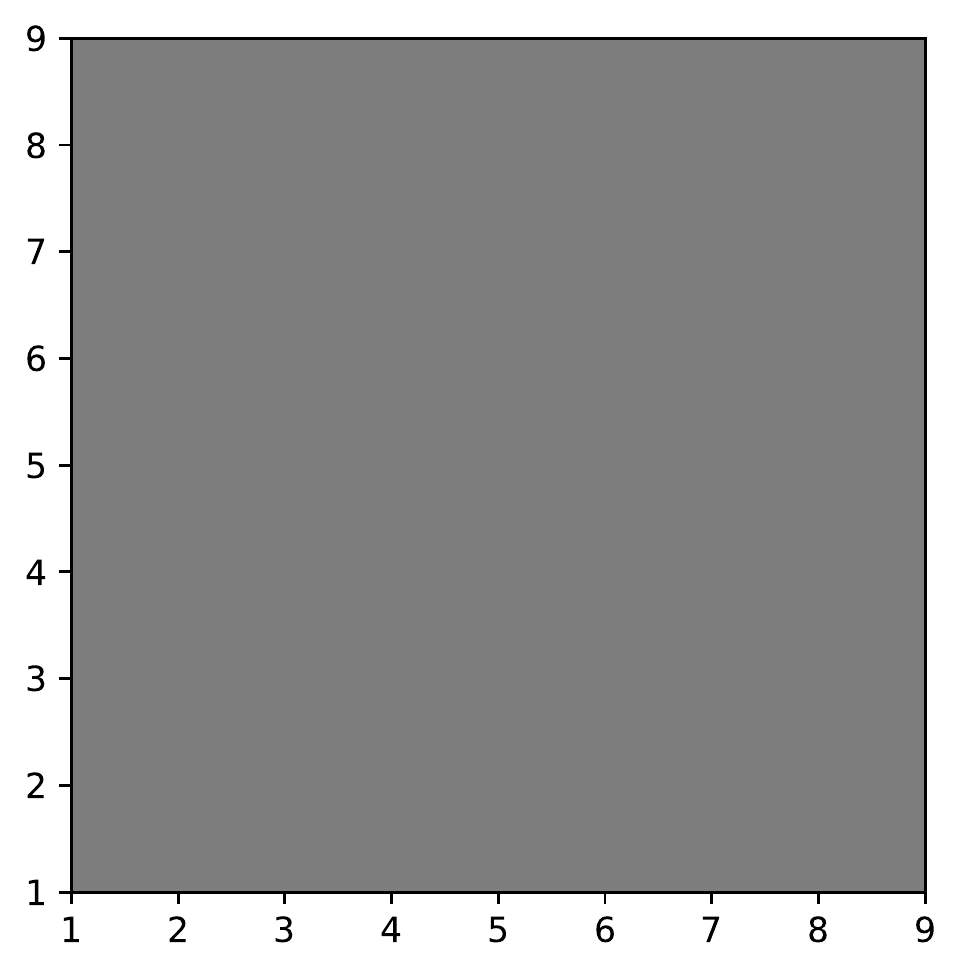}
		\caption{MNL}%
	\end{subfigure}%
	~ 
	\begin{subfigure}[t]{0.24\textwidth}
		\centering
		\includegraphics[height=1.3in]{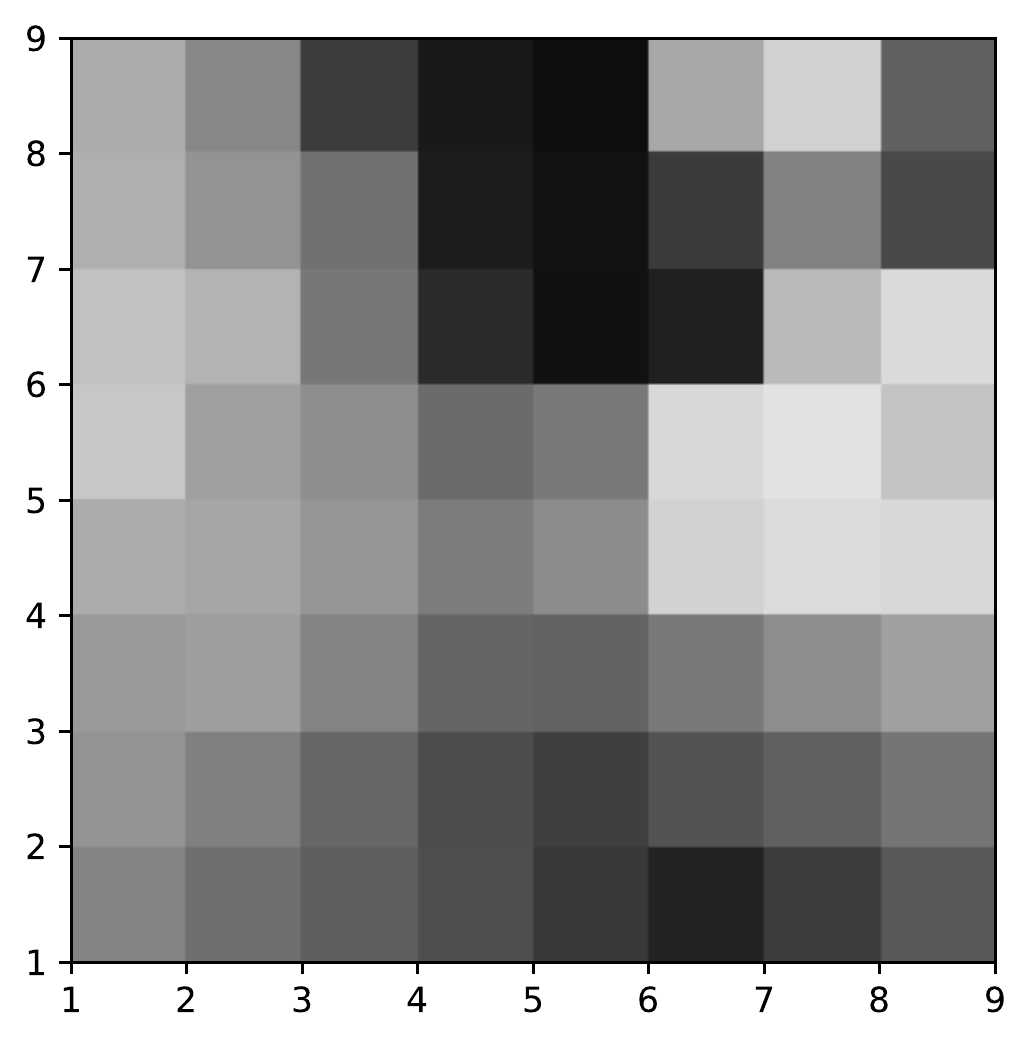}
		\caption{PCMC}%
	\end{subfigure}%
	~ 
	\begin{subfigure}[t]{0.24\textwidth}
		\centering
		\includegraphics[height=1.3in]{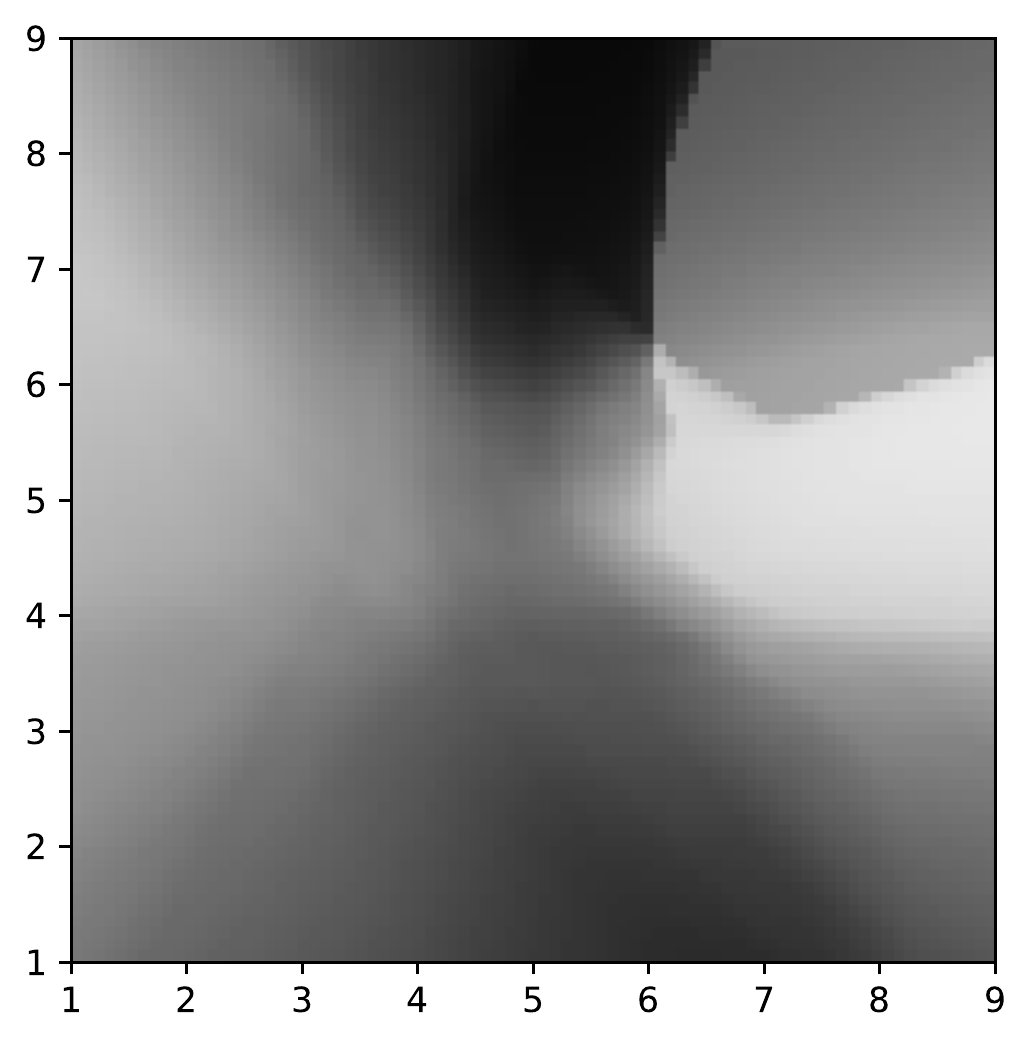}
		\caption{PCMC-Net $3\times16$}%
	\end{subfigure}%
	
	\caption{Preference for $a$ over $b$ for different attributes (coordinates of each pixel) of the third alternative $c$. Lighter shades indicate higher preference for $a$. Models were trained on $20000$ choice sets. In Fig.~\subref{fig:ground-truth}, the points $c_A$, $c_S$ and $c_C$ show examples of, respectively, attraction, similarity and compromise effects with respect to $a$. }
	\label{fig:trueblood-pref}
\end{figure}

As expected, MNL is unable to represent context effects due the independence of irrelevant alternatives.
After discretization, the original PCMC provides a good approximation of the MLBA model. Nevertheless, it is difficult to further refine it since the number of statistical parameters grows biquadratically with the number of bins for each feature. 
As shown in Table \ref{tab:kl}, the amortized inference approach of PCMC-Net allows a better approximation of the original model with significantly fewer statistical parameters than the discretized PCMC. 

\begin{table}[!htbp]
	\caption{Monte Carlo estimate of the expected KL divergence between the different models and the true MLBA model. A set of $10000$ points $c$, different from the training one, was used. }
	\label{tab:kl}
	\begin{center}
		\begin{tabular}{lllr}
			\hline
			$\hat{P}$ & $h\times\nu$ & \#parameters & $\E_{c\sim\mathcal{U}([1,9]^2)} [\KL (P\Vert\hat{P})]$ \\ 
			\hline
			MNL & & 2 & .119 \\ 
			PCMC & &  4290 & .022 \\
			PCMC-Net & $1\times16$ &  97  & .018  \\
			PCMC-Net & $2\times16$& 369  & .011  \\
			PCMC-Net  & $3\times16$ & 641   &  .009 \\
			\hline
		\end{tabular}
	\end{center}
\end{table}

\section{Experiments on airline itinerary choice modeling}
\label{sec:experiments}

In this section, we instantiate PCMC-Net for the case of airline itinerary choice modeling.
As shown in \cite{babutsidze2019}, this kind of data often exhibit attraction effects, calling for more flexible models such as PCMC.
Nevertheless, in the considered dataset, alternatives rarely repeat themselves, which makes the original inference approach for PCMC inappropriate.

\subsection{Dataset}

We used the dataset from \cite{mottini2017deep} consisting of flight bookings sessions on a set of European origins and destinations.  
Each booking session contains up to 50 different itineraries, one of which has been booked by the customer. 
There are 815559 distinct alternatives among which 84\% are singletons and 99\% are observed at most seven times.
In total, there are 33951 choice sessions of which 27160 were used for training and 6791 for testing. 
The dataset has a total of 13
features, both numerical and categorical, corresponding to individuals and alternatives (see Table \ref{tab:features}).

\subsection{Instantiation of PCMC-Net}

PCMC-Net was implemented in PyTorch \citep{paszke2017automatic}\footnote{Code available at \url{https://github.com/alherit/PCMC-Net}.} During training, a mini-batch is composed of a number of sessions whose number of alternatives can be variable.
Dynamic computation graphs are required in order to adapt to the varying session size. Stochastic gradient optimization is performed with Adam \citep{adam2015}.
In our experiments, numerical variables are unidimensional and thus are not embedded. They were standardized during a preprocessing step. 
Each categorical input of cardinality $c_i$ is passed through an embedding layer, such that the resulting dimension is obtained by the rule of thumb $d_i := \min (\ceil{c_i/2}, 50)$.
We maximize regularization by using a dropout probability of $0.5$ (see, e.g., \cite{baldi2013understanding}).
The additive constant $\epsilon$ was set to $0.5$. The linear solver was implemented with \texttt{torch.solve}, which uses LU decomposition.
Table \ref{tab:hyperparameters} shows the hyperparameters and learning parameters that were optimized by performing 25 iterations of Bayesian optimization (using GPyOpt \cite{gpyopt2016}).
Early stopping is performed during training if no significant improvement (greater than $0.01$ with respect to the best log loss obtained so far) is made on a validation set (a random sample consisting of 10\% of the choice sessions from the training set) during 5 epochs.
Using the hyperparameters values returned by the Bayesian optimization procedure and the number of epochs at early stopping (66), the final model is obtained by training on the union of the training and validation sets.

\begin{table}[!htbp]
	\caption{Hyperparameters optimized with Bayesian optimization.}
	\label{tab:hyperparameters}
	\begin{center}
		\begin{tabular}{lrrrr}
			\hline
			parameter & range & best value\\ 
			\hline
			learning rate &  $\{10^{-i}\}_{i=1\dots 6}$ & 0.001 \\ 
			batch size (in sessions) & $\{2^{i}\}_{i=0\dots 4}$ & 16 \\
			hidden layers in $f_{w_q}$ & $\{1,2,3\}$ & 2  \\
			nodes per layer in $f_{w_q}$ & $\{2^{i}\}_{i=5\dots 9}$  & 512 \\
			activation & \{ReLU, Sigmoid, Tanh, LeakyReLU\} & LeakyReLU\\
			\hline
		\end{tabular}
	\end{center}
\end{table}

\subsection{Results}

We compare the performance of the PCMC-Net instantiation against three simple baselines:
\begin{itemize}
	\item Uniform: probabilities are assigned uniformly to each alternative.
	\item Cheapest (non-probabilistic): alternatives are ranked by increasing price. %
	\item Shortest (non-probabilistic): alternatives are ranked by increasing trip duration. %
\end{itemize}
We also compare against the results presented in \cite{lheritier2019airline}
\begin{itemize}
	\item Multinomial Logit (MNL): choice probabilities are determined from the alternatives' features only, using some feature transformations to improve the performance. 
	\item Latent Class Multinomial Logit  (LC-MNL): in addition to the alternatives' features, it uses individual's features which are used to model the probability of belonging to some latent classes whose number is determined using the Akaike Information Criterion. Feature transformations are also used to improve the performance.	
	\item Random Forest (RF): a classifier is trained on the alternatives as if they were independent, considering both individual's and alternatives' features and using as label whether each alternative was chosen or not. 
	Some alternatives' features are transformed to make them relative to the values of each choice set. Since the classifier evaluates each alternative independently, the probabilities within a given session generally do not add to one, and therefore are just interpreted as scores to rank the alternatives.
\end{itemize}
And, finally, we compare to 
\begin{itemize}
	\item Deep Pointer Networks (DPN) \citep{mottini2017deep}: a recurrent neural network that uses both the features of the individual and those of the alternatives to learn to point to the chosen alternative from the choice sets given as sequences. The results are dependent on the order of the alternatives, which was taken as in the original paper, that is, as they were shown to the user.
\end{itemize}

We compute the following performance measures on the test set $\mathcal{T}$ of choice sets and corresponding individuals:
\begin{itemize}
		\item Normalized Log Loss (NLL): given a probabilistic choice model $\hat{P}$,\\ $\mathrm{NLL}\equiv-\frac{1}{|\mathcal{T}|}\sum_{(S,I)\in\mathcal{T}} \log \hat{P}_{S}(Y_S\vert I)$. 
		\item TOP $N$ accuracy: proportion of choice sessions where the actual choice was within the top $N$ ranked alternatives. In case of ties, they are randomly broken. We consider $N\in\{1,5\}$.
\end{itemize}

Table \ref{tab:results} shows that PCMC-Net outperforms all the contenders in all the considered metrics.
It achieves a 21.3\% increase in TOP-1 accuracy and a 12.8\% decrease in NLL with respect to the best contender for each metric.
In particular, we observe that the best in TOP $N$ accuracy among the contenders are LC-MNL and RF, both requiring manual feature engineering to achieve such performances whereas PCMC-Net automatically learns the best representations.
We also observe that our results are significantly better than those obtained with the previous deep learning approach DPN, showing the importance of the PCMC definition in our deep learning approach to model the complex behaviors observed in airline itinerary choice data. 

\begin{table}[!htbp]
	\caption{Results on airline itinerary choice prediction. * indicates cases with feature engineering.}
	\label{tab:results}
	\begin{center}
		\begin{tabular}{lrrrr}
			\hline
			method & TOP 1 & TOP 5 & NLL \\ 
			\hline
			Uniform & .063 & .255 & 3.24  \\ 
			Cheapest & .164 & .471 & -- \\
			Shortest & .154  & .472 & --  \\
			MNL* & .224 & .624 & 2.44\\
			LC-MNL* & .271 & .672 & 2.33\\
			RF* &.273 & .674 & -- \\ 
			DPN & .257 & .665 & 2.33\\
			PCMC-Net &  \bf{.331} & \bf{.745} & \bf{2.03} \\
			\hline
		\end{tabular}
	\end{center}
\end{table}

\section{Conclusions}
\label{sec:conclusions}

We proposed PCMC-Net, a generic neural network architecture equipping PCMC choice models with amortized and automatic differentiation based inference using alternatives' features. As a side benefit, the construction allows to condition the probabilities on the individual's features. 
We showed that PCMC-net is able to approximate any PCMC model arbitrarily well and, thus, maintains the flexibility (e.g., allowing to represent non-regular models) and the desired property of uniform expansion.
Being neural network based, PCMC-Net allows complex feature handling as previous machine learning and deep learning based approaches but with the additional theoretical guarantees.

We proposed a practical implementation showing the benefits of the construction on the challenging problem of airline itinerary choice prediction, where attraction effects are often observed and where alternatives rarely appear more than once---making the original inference approach for PCMC inappropriate.

As future work, we foresee investigating the application of PCMC-Net on data with complex features (e.g. images, texts, graphs ...) to assess the impact of such information on preferences and choice. 

\subsubsection*{Acknowledgments}

Thanks to María Zuluaga, Eoin Thomas, Nicolas Bondoux and Rodrigo Acuña-Agost for their
insightful comments and to the four anonymous reviewers whose suggestions have greatly improved this manuscript.

\bibliography{iclr2020_conference}
\bibliographystyle{iclr2020_conference}

\appendix
\section{Features of the airline itinerary choice dataset}

\begin{table}[!htbp]
	\caption{Features of the airline itinerary choice dataset.}
	\label{tab:features}
	\begin{center}
		\begin{tabular}{cclr}
			\hline
			& Type & Feature & Range/Cardinality\\
			\hline
			\multirow{7}{*}{Individual} & \multirow{2}{*}{Categorical} &  Origin/Destination & 97 \\
			& &Search Office & 11 \\ 
			\cline{2-4}
			& \multirow{5}{*}{Numerical} &Departure weekday & [0,6] \\
			&&Stay Saturday & [0,1] \\
			&&Continental Trip & [0,1] \\
			&&Domestic Trip & [0,1] \\
			&&Days to departure & [0, 343] \\
			\hline
			\multirow{8}{*}{Alternative} & Categorical & Airline (of first flight) & 63 \\
			\cline{2-4}
			& \multirow{7}{*}{Numerical} &Price  & [77.15,16781.5] \\
			&&Stay duration (minutes) & [121,434000] \\
			&&Trip duration (minutes) & [105, 4314] \\
			&&Number connections & [2,6] \\
			&&Number airlines & [1,4] \\ 
			&&Outbound departure time (in s from midnight) & [0, 84000] \\
			&&Outbound arrival time (in s from midnight) & [0, 84000] \\
			\hline
		\end{tabular}
	\end{center}
\end{table}

\end{document}